\documentclass{article}

\PassOptionsToPackage{numbers, compress}{natbib}



\usepackage[final]{neurips_2020}


\usepackage[utf8]{inputenc} 
\usepackage[T1]{fontenc}    
\usepackage[colorlinks=true,linkcolor=blue]{hyperref}       
\usepackage{url}            
\usepackage{booktabs}       
\usepackage{amsfonts}       
\usepackage{nicefrac}       
\usepackage{microtype}      
\usepackage{bm} 
\usepackage{amsmath}
\usepackage{amsthm}
\usepackage{graphicx}
\usepackage{mathtools}
\DeclarePairedDelimiter{\floor}{\lfloor}{\rfloor}
\newtheorem{definition}{Definition}
\newtheorem{theorem}{Theorem}
\newcommand{\beginsupplement}{%
        \setcounter{table}{0}
        \renewcommand{\thetable}{S\arabic{table}}%
        \setcounter{figure}{0}
        \renewcommand{\thefigure}{S\arabic{figure}}
        \setcounter{theorem}{0}
        \renewcommand{\thetheorem}{\arabic{theorem}}}
        
        

%

\title{Learning identifiable and interpretable latent models of high-dimensional neural activity using pi-VAE}

\author{
  Ding Zhou \\
  Department of Statistics\\
  Columbia University\\
  \texttt{dz2336@columbia.edu}
  \And
  Xue-Xin Wei \\
  Department of Neuroscience\\
  UT Austin\\
  \texttt{weixx@utexas.edu}
}



\begin{document}

\maketitle

\begin{abstract}


The ability to record activities from hundreds of neurons simultaneously in the brain has placed an increasing demand for developing appropriate statistical techniques to analyze such data. Recently, deep generative models have been proposed to fit neural population responses. While these methods are flexible and expressive, the downside is that they can be difficult to interpret and identify. To address this problem, we propose a method that integrates key ingredients from latent models and traditional neural encoding models. Our method, pi-VAE, is inspired by recent progress on identifiable variational auto-encoder, which we adapt to make appropriate for neuroscience applications. Specifically, we propose to construct latent variable models of neural activity \emph{while simultaneously} modeling the relation between the latent and task variables
(non-neural variables, \emph{e.g.} sensory, motor, and other externally observable states). The incorporation of task variables
results in models that are not only more constrained, but also show qualitative improvements in interpretability and identifiability. We validate pi-VAE using synthetic data, and apply it to analyze neurophysiological datasets from rat hippocampus and macaque motor cortex. We demonstrate that pi-VAE not only fits the data better, but also provides unexpected novel insights into the structure of the neural codes.
\end{abstract}


\section{Introduction}
Popular analysis methods of neural responses in neurophysiology mainly come in two classes: one based on regression, the other on latent variable modeling. Generalized Linear Model (e.g.,~\cite{Truccolo2005,Pillow2008}) and tuning curve analysis \cite{Georgopoulos1986,Zhang1998,O1971} are notable examples of the regression-based approach, and both have been widely used in neuroscience in the past few decades~\cite{Snippe1996,Schwartz1988,Zhang1998,Park2014,Calhoun2019}. These methods express the neural firing rate as a function of the stimulus variable, thus naturally define encoding models, which can be inverted to decode the stimulus variables~\cite{Georgopoulos1986,sanger1996,Zhang1998,Brown1998,Oram1998}.
In contrast, the latent-based approach aims to account to variability of the neural responses using  a relatively small number of latent variables which are typically not observed. Recently, various latent-based methods have been developed or applied to analyze neural data and in particular simultaneously recorded neural population data, including principal component analysis~\cite{Stopfer2003,Mazor2005,Briggman2005,Churchland2012,Mante2013}, factor analysis~\cite{Byron2009,Sadtler2014,Duncker2018}, linear/nonlinear dynamical systems~\cite{Macke2011,Buesing2012,Gao2016,Duncker2019,Pandarinath2018}, among others (\emph{e.g.},~\cite{Broome2006,Zhao2017,Wu2017,Keshtkaran2019}). Each class of models carries certain advantages and disadvantages. Regression-based methods tend to have higher interpretability, however, they often suffer the problem of under-fitting. Latent-based models are more flexible in accounting for the neural variability, however they may be difficult to interpret and sometimes not identifiable. 
Notably, some studies had incorporated latent fluctuations into the encoding models~\cite{Wu2009,Lawhern2010,Goris2014,Ecker2014,Lin2015,Calhoun2019} yielding promising results, although these models often assumed
highly specialized latent structure, thus potentially limits the applicability in practice. 


The issues of identifiability and interpretability are becoming increasingly important as the neuroscience community adapts more sophisticated methods from nonlinear deep generative models~\cite{Duncker2019,Whiteway2019}. Deep generative models have the promise of extracting complex nonlinear structure which may be difficult to achieve by linear methods, as demonstrated by recent work based on the variational auto-encoder (VAE) (\emph{e.g.},~\cite{kingma2013auto,Rezende2015,Gao2016,Pandarinath2018}). However, over the past few years it has become increasingly clear that the latents extracted from these models, and VAE in particular, are often highly entangled therefore difficult to interpret~\cite{Hoffman2016,Higgins2017,Alemi2017,Chen2018,Locatello2018,Burgess2018,khemakhem2019variational}. Given these considerations, an important question is 
how to model neural population responses with nonlinear models that are powerful yet scientifically insightful via identifiability and interpretability.





We propose a model formulation which represents one step toward addressing this question. Specifically, we draw on recent progress on identifiable VAE (iVAE)~\cite{khemakhem2019variational,sorrenson2020disentanglement}, and generalize and adapt it to make it directly applicable to a broad variety of datasets. 
Conceptually, our method combines the respective strengths of regression-based and latent-based approaches~({Fig.~\ref{fig:mdl}}): i) by using the VAE architecture, it is expressive and flexible; ii) 
by treating task variables as labels and explicitly modeling them with the latents, our method is better constrained and under certain conditions identifiable. We apply our method to synthetic data and eletrophysiological datasets from population recording of rat hippocampus in a navigation task~\cite{grosmark2016diversity,grosmark2016recordings} and the motor area of macaque during a reaching task~\cite{Gallego2020}. We demonstrate that our method can recover interpretable latent structure that is informative about the structure of the neural code and dynamics.

\section{Model}
\paragraph{Notations} We denote $\mathbf{x}\in\mathbb{R}^n$ as observations. For our purpose, $\mathbf{x}$ specifically represents the population response or spike counts within a small time window.
We use $\mathbf{u}\in\mathbb{R}^d$ to represent \emph{task variables} (or \emph{labels}), that are measured along with the neural activities, e.g., the location of the animal when studying navigation tasks.
$\mathbf{u}$ can be discrete or continuous. Additionally, we denote  $\mathbf{z}\in\mathbb{R}^m (m\ll n)$ as the unobserved low-dimensional latent variables.




\subsection{Generative model} 
\begin{figure}[!ht]
    \centering
    \includegraphics[width=0.93\textwidth]{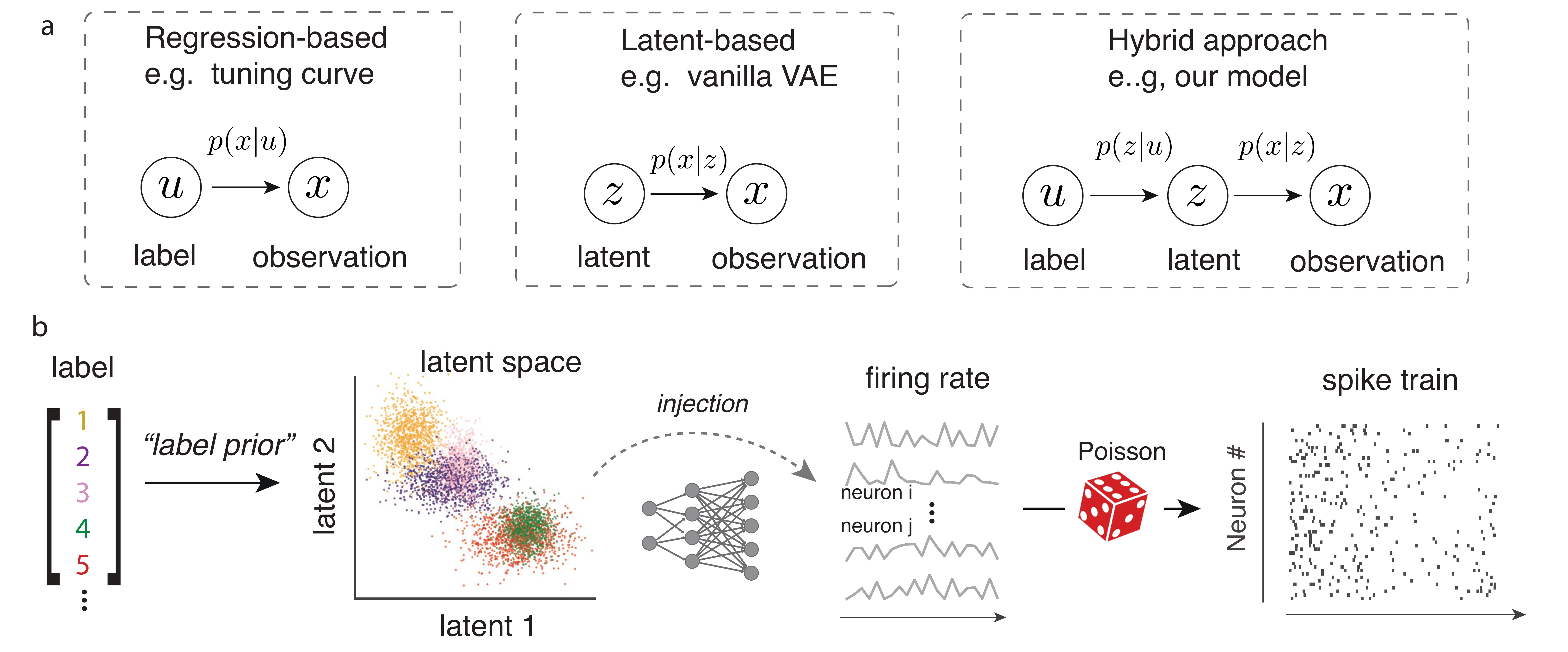}
    \caption{The model framework and generative model. (a) Structure of three classes of statistical models for neural data analysis. Our method is based on the integration of the first two classes into a hybrid approach. Our approach models the statistical dependence between label ($\mathbf{u}$) and latent ($\mathbf{z}$) as well as between latent~($\mathbf{z}$) and observation~($\mathbf{x}$) simultaneously. (b) Schematic illustration of the generative model of pi-VAE. Major components include the ``label prior'' between the task variables and the latent, an injective mapping between latent and firing rate parameterized by normalizing flow, and Poisson observation noise.}
    \label{fig:mdl}
    \vspace{-1mm}
\end{figure}

Our goal is to develop models that are flexible and expressive in capturing the variability of the data, while also well-constrained so that the models would enjoy identifiability and interpretability. Motivated by these considerations, we propose a generative model formulation which integrates key ingredients of the latent-based and regression-based approaches (see Fig.~\ref{fig:mdl}a):
\begin{equation}
\label{eqn:mdl}
    p_{\bm{\theta}}(\mathbf{x,z}|\mathbf{u}) = p_{\mathbf{f}}(\mathbf{x}|\mathbf{z})p_{\mathbf{T},\bm{\lambda}}(\mathbf{z}|\mathbf{u}).
\end{equation}
This is a general formulation, and some previous models may be re-formulated to conform with it (\emph{e.g.},~\cite{Lawhern2010,Goris2014,Ecker2014}). 
In this paper, we will focus on a specific implementation that is directly inspired by the recent work on identifiable VAE~\cite{khemakhem2019variational,li2019,sorrenson2020disentanglement}. In the interest of neuroscience applications, we have developed a method that can simultaneously deal with Poisson noise, both discrete and continuous labels, and larger output dimension than input dimension (Fig.~\ref{fig:mdl}b). We will show that our model is sufficiently expressive for many applications, yet still constrained enough to be identifiable.




We start by defining the component that
describes the relation between the label and the latent, \emph{i.e.},
$p_{\mathbf{T},\bm{\lambda}}(\mathbf{z}|\mathbf{u})$. We will refer to it as the \emph{``label prior"}.
Following ~\cite{khemakhem2019variational},
we assume $p_{\mathbf{T},\bm{\lambda}}(\mathbf{z}|\mathbf{u})$ to be conditionally independent, where each element $\mathbf{z}_i\in \mathbf{z}$ has \emph{an exponential family distribution} given $\mathbf{u}$,
\begin{equation}\label{eq:prior}
    p_{\mathbf{T},\bm{\lambda}}(\mathbf{z}|\mathbf{u}) = \prod_{i=1}^m p(\mathbf{z}_i|\mathbf{u}) = \prod_{i=1}^m \frac{Q_i(\mathbf{z}_i)}{Z_i(\mathbf{u})}\exp\left[\sum_{j=1}^k T_{i,j}(\mathbf{z}_i)\lambda_{i,j}\left(\mathbf{u}\right)\right],
\end{equation}
where $Q_i$ is the base measure, $\mathbf{T}_i = \left(T_{i,1},\cdots,T_{i,k}\right)$ are the sufficient statistics, $Z_i(\mathbf{u})$ is the normalizing factor, $\bm{\lambda}_i = \left(\lambda_{i,1}(\mathbf{u}),\cdots,\lambda_{i,k}(\mathbf{u})\right)$ are the natural parameters, and $k$ is pre-defined number of sufficient statistics. 
Practically, a small number of components $k$ is often sufficient for the problems which we have considered.
For discrete $\mathbf{u}$, we simply use a different $\lambda_{ij}$ for different $\mathbf{u}$.
To deal with continuous $\mathbf{u}$, we develop a procedure by parameterizing $\lambda_{ij}$ as a function of $\mathbf{u}$ using a feed-forward neural network. Details are given in Supplementary Information (SI). 




We next turn to the dependence between the latent and observation, \emph{i.e.},
$p_{\mathbf{f}}(\mathbf{x}|\mathbf{z})$. In ~\cite{khemakhem2019variational},
$p_{\mathbf{f}}(\mathbf{x}|\mathbf{z})$ is defined using additive noise $p_{\mathbf{f}}(\mathbf{x}|\mathbf{z}) = p_{\bm{\epsilon}}\left(\mathbf{x}-\mathbf{f}(\mathbf{z})\right)$, i.e. $\mathbf{x}=\mathbf{f}(\mathbf{z})+\bm{\epsilon}$, where $\bm{\epsilon}$ is an independent noise variable.
To model the spike data, we generalize it to Poisson model $p_{\mathbf{f}}(\mathbf{x}|\mathbf{z}) = \mathrm{Poisson}\left(\mathbf{f}(\mathbf{z})\right)$ with $\mathbf{f}$ being the instantaneous firing rate to deal with the count observations. We implement $\mathbf{f}$ using normalizing flow as detailed later. Putting together, we denote $\bm{\theta} = \left(\mathbf{f,T,}\bm{\lambda}\right)$ as parameters in generative model~\ref{eqn:mdl}. We refer to our model as Poission identifiable VAE, or pi-VAE for simplicity.

{\bf Identifiability} \cite{khemakhem2019variational} has proved that the additive noise model is identifiable with certain assumptions.  Under the same assumptions, we can prove that pi-VAE is also identifiable.

\begin{definition} 
Let $\sim$ be an equivalence relation on
the domain of the parameters $\Theta=\left(\bm{\theta}:= \left(\mathbf{f,T,}\bm{\lambda}\right)\right)$. Model~\ref{eqn:mdl} is said to be identifiable up to $\sim$ if 
$p_{\bm{\theta}}(\mathbf{x|u}) = p_{\tilde{\bm{\theta}}}(\mathbf{x|u})\implies \bm{\theta}\sim \tilde{\bm{\theta}}.$
\end{definition}

\begin{definition}
Define $\sim$ as $\bm{\theta}\sim \tilde{\bm{\theta}} \iff \exists A,c, \mathbf{T}\left(\mathbf{f}^{-1}(\mathbf{x})\right) = A\tilde{\mathbf{T}}\left(\tilde{\mathbf{f}}^{-1}(\mathbf{x})\right) + c,\forall \mathbf{x}\in\mathrm{Img}(\mathbf{f})\subset\mathbb{R}^n$, where $\bm{\theta}=\left(\mathbf{f,T,}\bm{\lambda}\right),\bm{\tilde{\theta}}=(\mathbf{\tilde{f},\tilde{T},}\bm{\tilde{\lambda}})$, $A$ is a full rank $mk\times mk$ matrix, $c\in\mathbb{R}^{mk}$ is a vector. 
\end{definition}
\begin{theorem}
Assume that we observe data sampled from pi-VAE model defined according to equation~\ref{eqn:mdl},\ref{eq:prior} with Poisson noise and parameters $\bm{\theta}=\left(\mathbf{f,T,}\bm{\lambda}\right)$. Assume the following holds:

i) The firing rate function $\mathbf{f}$ in equation~\ref{eqn:mdl} is injective,

ii) The sufficient statistics $T_{i,j}$ in~\ref{eq:prior} are differentiable almost everywhere, and their derivatives $T_{i,j}'$ are nonzero almost everywhere for $1\leq i\leq m,1\leq j\leq k$,

iii) There exists $mk + 1$ distinct points $\mathbf{u}^{0},\cdots,\mathbf{u}^{mk+1}$ such that the matrix 
\[L = \left(\bm{\lambda}(\mathbf{u}^{1}) -\bm{\lambda}(\mathbf{u}^{0}),\cdots , \bm{\lambda}(\mathbf{u}^{mk})- \bm{\lambda}(\mathbf{u}^{0})\right)\]
of size $mk\times mk$ is invertible,
then the pi-VAE model is identifiable up to $\sim$.
\end{theorem}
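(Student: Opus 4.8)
The plan is to mirror the identifiability argument of the additive-noise iVAE, but to replace its Fourier/deconvolution step -- which removes the noise by dividing out a nonvanishing characteristic function -- with an argument tailored to Poisson observations. Throughout I would write the exponential-family prior compactly as $\log p_{\mathbf{T},\bm{\lambda}}(\mathbf{z}|\mathbf{u}) = \log Q(\mathbf{z}) - \log Z(\mathbf{u}) + \mathbf{T}(\mathbf{z})^{\top}\bm{\lambda}(\mathbf{u})$, stacking the $T_{i,j}$ and $\lambda_{i,j}$ into vectors $\mathbf{T},\bm{\lambda}\in\mathbb{R}^{mk}$.

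First I would start from the hypothesis $p_{\bm{\theta}}(\mathbf{x}|\mathbf{u}) = p_{\tilde{\bm{\theta}}}(\mathbf{x}|\mathbf{u})$ for every count vector $\mathbf{x}\in\mathbb{Z}_{\geq 0}^{n}$ and every label $\mathbf{u}$, and push both sides through the firing-rate map. Writing $\mathbf{y}=\mathbf{f}(\mathbf{z})$ and letting $q(\cdot|\mathbf{u}) = \mathbf{f}_{*}\,p_{\mathbf{T},\bm{\lambda}}(\cdot|\mathbf{u})$ denote the law of the rate vector, each marginal becomes a Poisson mixture $p_{\bm{\theta}}(\mathbf{x}|\mathbf{u}) = \int \mathrm{Poisson}(\mathbf{x};\mathbf{y})\,q(d\mathbf{y}|\mathbf{u})$. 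The key observation is that such mixtures are identifiable: the probability generating function of the counts equals the Laplace transform of the rate law (the substitution $t_{l}=1-s_{l}$ turns $\sum_{\mathbf{x}} p(\mathbf{x})\prod_{l} s_{l}^{x_{l}}$ into $\int e^{-\mathbf{t}^{\top}\mathbf{y}}\,q(d\mathbf{y}|\mathbf{u})$), and since rates are nonnegative this Laplace transform determines $q$ uniquely. Hence $\mathbf{f}_{*}\,p_{\mathbf{T},\bm{\lambda}}(\cdot|\mathbf{u}) = \tilde{\mathbf{f}}_{*}\,p_{\tilde{\mathbf{T}},\tilde{\bm{\lambda}}}(\cdot|\mathbf{u})$ for all $\mathbf{u}$. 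This is the step that replaces the additive-noise deconvolution, and I expect it to be the main obstacle: one must make the mixture-identifiability argument rigorous when $n>m$, so that $q(\cdot|\mathbf{u})$ is supported on the $m$-dimensional image $\mathbf{f}(\mathbb{R}^{m})\subset\mathbb{R}^{n}$ rather than possessing a density on $\mathbb{R}^{n}$.

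With the rate laws matched, comparing supports gives $\mathbf{f}(\mathbb{R}^{m}) = \tilde{\mathbf{f}}(\mathbb{R}^{m})$, so by injectivity of $\mathbf{f}$ and $\tilde{\mathbf{f}}$ the composition $\mathbf{h} := \tilde{\mathbf{f}}^{-1}\circ\mathbf{f}$ is a well-defined bijection of $\mathbb{R}^{m}$, and the equality of pushforwards becomes the change-of-variables identity $\log p_{\mathbf{T},\bm{\lambda}}(\mathbf{z}|\mathbf{u}) = \log p_{\tilde{\mathbf{T}},\tilde{\bm{\lambda}}}(\mathbf{h}(\mathbf{z})|\mathbf{u}) + \log|\det J_{\mathbf{h}}(\mathbf{z})|$. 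Substituting the exponential-family form, evaluating at the $mk+1$ points $\mathbf{u}^{0},\dots,\mathbf{u}^{mk}$ of assumption (iii), and subtracting the $\mathbf{u}^{0}$ equation from each of the others cancels every term that depends on $\mathbf{z}$ alone -- namely $\log Q$, $\log\tilde{Q}$, and the Jacobian term -- and leaves the linear system $L^{\top}\mathbf{T}(\mathbf{z}) = \tilde{L}^{\top}\tilde{\mathbf{T}}(\mathbf{h}(\mathbf{z})) + \mathbf{b}$, where $L$ is exactly the matrix of assumption (iii) and $\mathbf{b}$ absorbs the $\log Z$ constants. Inverting $L$ yields $\mathbf{T}(\mathbf{z}) = A\,\tilde{\mathbf{T}}(\mathbf{h}(\mathbf{z})) + c$ with $A = (L^{\top})^{-1}\tilde{L}^{\top}$, and substituting $\mathbf{z}=\mathbf{f}^{-1}(\mathbf{x})$ for $\mathbf{x}\in\mathrm{Img}(\mathbf{f})$ recovers precisely the relation defining $\sim$ in Definition 2.

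The remaining task is to confirm that $A$ has full rank $mk$, as Definition 2 demands. Here I would use assumption (ii): differentiating $\mathbf{T}(\mathbf{z}) = A\,\tilde{\mathbf{T}}(\mathbf{h}(\mathbf{z})) + c$ and evaluating at sufficiently many distinct points per coordinate, the nonvanishing of the derivatives $T_{i,j}'$ guarantees that the collection of Jacobians of $\mathbf{T}$ spans an $mk$-dimensional space, which forces the square matrix $A$ to be invertible. This is the only place where the detailed structure of the sufficient statistics enters, and it parallels the corresponding lemma in the additive-noise case; it completes the identification of $\bm{\theta}$ with $\tilde{\bm{\theta}}$ up to $\sim$.
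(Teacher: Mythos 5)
Your proposal is correct in outline, but it takes a genuinely different route from the paper. The paper's entire proof is a short reduction: applying the deterministic map that sends each count to its zero/nonzero indicator turns a $\mathrm{Poisson}(\lambda)$ observation into a Bernoulli observation with parameter $1-\exp(-\lambda)$; since $\lambda\mapsto 1-\exp(-\lambda)$ is strictly monotone, the induced Bernoulli model has injective mean function $1-\exp(-\mathbf{f}(\cdot))$ and satisfies the same assumptions (i)--(iii), equal Poisson conditionals induce equal Bernoulli conditionals, and the Bernoulli identifiability result of Khemakhem et al.\ then yields $\bm{\theta}\sim\tilde{\bm{\theta}}$ (the equivalence transfers because the preimages coincide: $(1-e^{-\mathbf{f}})^{-1}(1-e^{-\mathbf{x}})=\mathbf{f}^{-1}(\mathbf{x})$ for $\mathbf{x}\in\mathrm{Img}(\mathbf{f})$). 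You instead re-run the whole iVAE argument directly, replacing its Fourier/deconvolution step with uniqueness of mixed-Poisson laws: the probability generating function of the counts is the Laplace transform of the rate law, and Laplace-transform uniqueness on the nonnegative orthant identifies the pushforward $\mathbf{f}_*\,p_{\mathbf{T},\bm{\lambda}}(\cdot|\mathbf{u})$. That substitution is sound, and the obstacle you flag --- that $q(\cdot|\mathbf{u})$ has no density on $\mathbb{R}^n$ when $n>m$ --- is not actually one: Laplace-transform uniqueness holds for arbitrary probability measures, so no density is needed at that stage (densities, i.e.\ the volume-corrected change of variables on the image, only enter at your next step, exactly as in the additive-noise iVAE proof). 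Your remaining steps (differencing at the $mk+1$ labels, inverting $L$, and establishing full rank of $A$ from assumption (ii)) are precisely the iVAE steps and go through at the same level of rigor as that source. The trade-off: the paper's reduction is three lines and outsources everything, including the invertibility of $A$, to the cited Bernoulli theorem, at the price of discarding all information in the counts beyond their zero pattern and of being non-self-contained; your direct argument is longer but self-contained up to the iVAE rank lemmas, proves the stronger intermediate fact that the law of the firing-rate vector itself is identified, and would extend to any count-observation model whose mixtures are identifiable by transform arguments.
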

This theorem is a straight-forward generalization of the results in~\cite{khemakhem2019variational}. Proof is given in the SI. This theorem means, if two sets of model parameters lead to the same marginal distribution of $\mathbf{x}$, then $\left(\mathbf{f,T,}\bm{\lambda}\right)\sim (\mathbf{\tilde{f},\tilde{T},}\bm{\tilde{\lambda}})$. Thus one can hope to recover posterior distribution $p(\mathbf{z|x})$ up to a linear transformation $A$ and point-wise non-linearities between $\mathbf{T}$ and $\mathbf{\tilde{T}}$, as well as the joint distribution $p(\mathbf{x,z})$.
Note that other forms of identifiability may be derived with modified assumptions~\cite{khemakhem2019variational}. While practically, without knowing the ground truth, the assumptions may be difficult to verify, some encouraging preliminary evidence suggest that identifiability may have some robustness with mild violations of model assumptions~\cite{sorrenson2020disentanglement}.

We next describe how to parameterize the injection 
$\mathbf{f}:\mathbb{R}^m\rightarrow\mathbb{R}^n$.
Here we extend the General Incompressible-flow Network (GIN) proposed in~\cite{sorrenson2020disentanglement}, which shares the flexibility of RealNVP~\cite{dinh2016density} and the volume-preserving property of NICE~\cite{dinh2014nice}. Practically, we found our implementation to be reasonably efficient computationally.
Specifically, GIN defines a mapping from $\mathbb{R}^D\rightarrow\mathbb{R}^D$ with Jacobian determinant equal to 1~\cite{sorrenson2020disentanglement}. It  splits the $D$-dimensional input $\mathbf{x}$ into two parts $\mathbf{x}_{1:l}, \mathbf{x}_{l+1:D}$, where $l<D$. The output $\mathbf{y}$ is defined as the concatenation of $\mathbf{y}_{1:l}$ and $\mathbf{y}_{l+1:D}$,
\begin{align}
\mathbf{y}_{1:l} &= \mathbf{x}_{1:l}\\
\mathbf{y}_{l+1:D} &= \mathbf{x}_{l+1:D}\odot \exp\left(s\left(\mathbf{x}_{1:l}\right)\right) + t\left(\mathbf{x}_{1:l}\right),
\end{align}
where $s(\cdot)$ and $t(\cdot)$ are both functions defined on $\mathbb{R}^{l}\rightarrow\mathbb{R}^{D-l}$, and the total sum of $s\left(\mathbf{x}_{1:l}\right)$ is constrained to be zero by setting the final component to be the negative sum of previous components.

The original GIN~\cite{sorrenson2020disentanglement} only deals with the case where the input and output have the same dimensions. In our case, the output dimension is often much larger than the input dimension. We thus develop a new scheme to parameterize $\mathbf{f}:\mathbb{R}^m\rightarrow \mathbb{R}^n$ which retains the properties of GIN. We first map $\mathbf{z}_{1:m}$ to the concatenation of $\mathbf{z}_{1:m}$ and $t\left(\mathbf{z}_{1:m}\right)$. Note that this is equivalent to GIN model with input as $\mathbf{z}_{1:m}$ padding $n$-$m$ zeros. We then use several GIN blocks to map from $\mathbb{R}^n\rightarrow \mathbb{R}^n$. Recalling that each GIN block is an injection (since part of it is an identity map, one can not map two different inputs to the same output), it follows that the composition of several blocks remains an injection.
While we use an extension of GIN~\cite{sorrenson2020disentanglement} to implement the injection $f$ here, conceivably other implementations should be possible, {e.g., multi-layer perceptron with increasing number of nodes from earlier to later layers}. The efficiency of the different implementations will need to be evaluated in future.


\subsection{Inference algorithm}
The inference procedure is a modification of VAE~\cite{kingma2013auto}. Our algorithm simultaneously learns the deep generative model and the approximate posterior $q(\mathbf{z|x,u})$ of true posterior $p(\mathbf{z|x,u})$ by maximizing $\mathcal{L}(\bm{\theta,\phi})$, which is the evidence lower bound (ELBO) of $p(\mathbf{x|u})$,
\begin{equation}
    \log p(\mathbf{x|u})\geq \mathcal{L}(\bm{\theta,\phi}) =  \mathbb{E}_{q(\mathbf{z|x,u})}\left[\log\left(p(\mathbf{x,z|u})\right) - \log\left(q(\mathbf{z|x,u})\right)\right].
\end{equation}
Similar to~\cite{johnson2016composing}, we decompose the approximate posterior as 
\begin{equation}
    q(\mathbf{z|x,u})\propto q_{\bm{\phi}}(\mathbf{z|x})p_{\mathbf{T},\bm{\lambda}}(\mathbf{z|u}),
\end{equation}
where $q_{\bm{\phi}}(\mathbf{z|x})$ is assumed to be conditionally independent exponential family distribution, i.e. $q_{\bm{\phi}}(\mathbf{z|x})=\prod_{i=1}^m q(\mathbf{z}_i|\mathbf{x})$, and is parameterized by neural network. $p_{\mathbf{T},\bm{\lambda}}(\mathbf{z|u})$ is defined in equation~\ref{eq:prior}.


We modeled both $q_{\bm{\phi}}(\mathbf{z|x}), p_{\mathbf{T},\bm{\lambda}}(\mathbf{z|u})$ as independent Gaussian distribution, used the same network architecture (see SI for details) as well as Adam optimizer~\cite{kingma2014adam} with learning rate equal to $5\times 10^{-4}$, and other values were set to the recommendation values for all the experiments in this paper.

{\bf Inferring the latent} After learning $q(\mathbf{z|x,u})$, the latent from pi-VAE model can be inferred by calculating the posterior mean. It is also of interest to infer the latent without using the label prior $p_{\mathbf{T},\bm{\lambda}}(\mathbf{z|u})$, which could be done by calculating the posterior mean estimate of $q_{\bm{\phi}}(\mathbf{z|x})$ instead.

{\bf Decoding the label}
Because pi-VAE defines an encoding model on the label, one can examine how well the label could be decoded from the neural activity, which also provides a way to check the validity of the model. Under our model formulation, decoding could be done by Bayesian rule and Monte Carlo sampling:
$p(\mathbf{u|x})\propto\int p(\mathbf{x|z,u})p(\mathbf{z|u})p(\mathbf{u}) d\mathbf{z}$,
where the integration on right hand side can be computed through randomly sampling $p(\mathbf{z|u})$. We assume a uniform prior on $\mathbf{u}$.

\section{Results}

\subsection{Validation using synthetic data}
We validated pi-VAE using synthetic data generated from models with continuous or discrete labels. 

{\bf Discrete label}
We generated 2-dimensional latent samples $\mathbf{z}$ from a five clusters Gaussian mixture model, similar to~\cite{khemakhem2019variational} (see Fig.~\ref{fig:simulation}a). The mean of each cluster was chosen independently from a uniform distribution on $[-5,5]$ and variance from a uniform distribution on $[0.5,3]$. These latent samples were then mapped to the mean firing rate of 100-dimensional Poisson observations through a RealNVP network (details in SI). Example results are shown in
Fig.~\ref{fig:simulation}a-d.

{\bf Continuous label}
We generated $\mathbf{u}$ from a uniform distribution on $[0,2\pi]$, and latent samples $\mathbf{z}$ as a 2-dimensional independent Gaussian distribution with mean being $(\mathbf{u}, 2\sin\mathbf{u})$, and variance being $ (0.6-0.3\mathopen|\sin\mathbf{u}\mathclose|, 0.3\mathopen|\sin\mathbf{u}\mathclose|)$. Observations were generated in the same way as simulation of discrete label. Example results are shown in Fig.~\ref{fig:simulation}e-h.

Based on these and other numerical experiments, we found that in general pi-VAE could reliably uncover latent structure similar to ground truth for both discrete and continuous labels, while VAE often leads to more distorted latent. Note that our VAE implementation is similar to pi-VAE except that no label prior is used (\emph{e.g.}, Poisson observation noise is still assumed). We also found pi-VAE without label prior (during the inference) still led to reasonably good recovery of the latent (Fig.~\ref{fig:simulation}c,g), suggesting that incorporating label prior could help with learning a better model, not just inference.

\begin{figure}[!ht]
    \centering
      \includegraphics[width=0.95\textwidth]{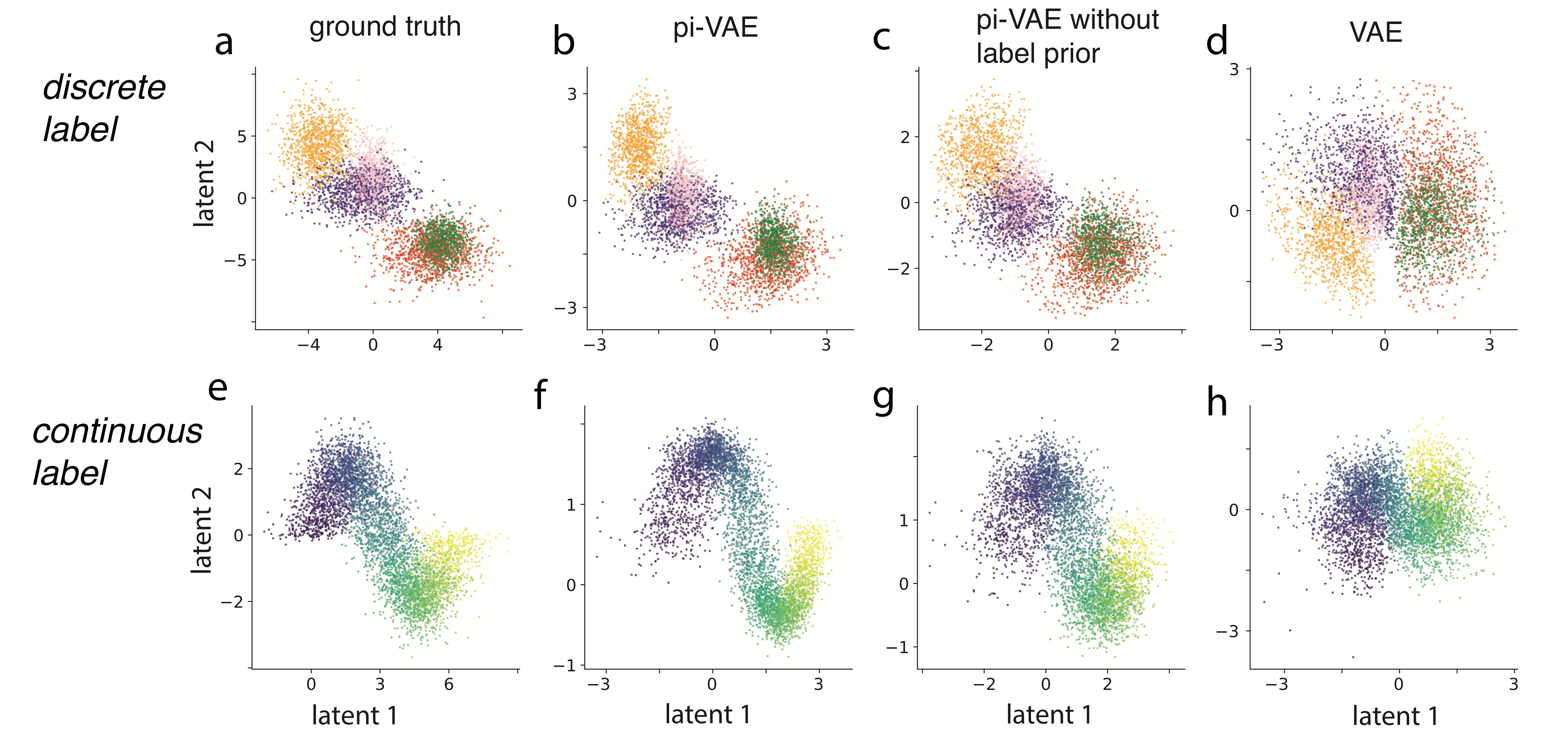}
    \caption{Example numerical experiments, suggesting that pi-VAE, but not VAE, could identify latent structure. (a) True latent variables, simulated based on discrete label, (b) mean of the latent posterior $q(\mathbf{z|x,u})$ estimated from pi-VAE, (c) mean of $q(\mathbf{z|x})$ from pi-VAE, (d) mean of the latent posterior from VAE, that is, the Bayesian estimate \emph{inferred} without the label prior. (e-h) similar to (a-d), but for a simulation based on continuous label.}
    \label{fig:simulation}
    \vspace{-3mm}
\end{figure}



\subsection{Applications to neural population data}

We have applied pi-VAE to analyze two electrophysiology datasets, each has more than $100$ simultaneously recorded neurons when the animals were performing behavioral tasks.  In these real data applications the ground truth is unknown and the assumptions required by identifiability may be violated~\cite{sorrenson2020disentanglement}, making it difficult to assess identifiability directly. Our rationale is that, assuming the ground truth is structured, models with better  
identifiability would still lead to more interpretable latent representation.
Encouragingly, examination of the latent space extracted from these datasets indeed suggest that pi-VAE could extract interpretable and meaningful latent structure.

\subsubsection{Monkey reaching data}

We first applied our method to a previously published monkey reaching datasets (see~\cite{Gallego2020}, kindly shared by the authors). In these experiments, Monkey C was performing a reaching task with $8$ different directions, while neural activities in areas M1 and PMd were simultaneously recorded (for details, see~\cite{Gallego2020}). We analyzed two sessions, and obtained similar results. We will focus on Session 1 here, and detailed results from Session 2 can be found in SI.

For each direction, there are $\sim 25$ trials/repeats (see Fig.~\ref{fig:chewie_lat}a). We analyzed  $192$ neurons from PMd area, and focused on the reaching period from go cue (defined as $t=0$) to the end, which typically lasts for $\sim 1$ second. We binned the ensemble spike activities into $50$ms bins. We used the spike activities as observation $\mathbf{x}$, and the reaching direction as the discrete labels $\mathbf{u}$. We randomly split the dataset into $24$ batches, where each batch contains at least one trial for each direction. We randomly split them into training, validation and test data ($20,2,2$ batches). 
We fit 4-dimensional latent models to the data based on pi-VAE and VAE. 



{\bf Goodness of fit} We first assessed the goodness of fit by examining the root-mean-square error (RMSE) of the PSTH based on the prediction of each model (see Fig.~\ref{fig:chewie_lat}a for an example neuron). Fig.~\ref{fig:chewie_lat}b,c show that pi-VAE leads to the smallest RMSE of firing rate in most neurons, followed by VAE, then tuning curve model. Next, we computed the log marginal likelihood $p(\mathbf{x})$ on the held-out test data by randomly sampling both $p(\mathbf{z|u})$ and $p(\mathbf{u})$. We found that pi-VAE leads to larger mean marginal log-likelihood than VAE and tuning curve model ($-123,-123.4,-127.6$ respectively, t-test $p<10^{-6}$). These results suggest that pi-VAE provides the best fit to the data among the alternatives.

{\bf Decoding reaching direction} We wondered whether pi-VAE also provides a better encoding model of the reaching direction. We examined how well pi-VAE could decode reaching direction, and compare the performance to a traditional method based on direction tuning curves. On held-out test data, pi-VAE achieved an average single-frame ($50$ms) decoding accuracy of $61\%$, better than  $47\%$ from tuning curve model.
Examination of the time course of the decoding performance~(Fig.~\ref{fig:chewie_lat}d) reveals that pi-VAE achieves $60\%$ during the first few frames before initiation of reaching, while tuning curve model is much worse during this period. However, when reaching speed reaches its maximum (around 0.5s, Fig.~\ref{fig:chewie_lat}e), both models achieve almost perfect performance~(Fig.~\ref{fig:chewie_lat}d). These observations tentatively suggest that information about the reaching direction may be encoded in different format during different phases of reaching in this task.

\begin{figure}[!ht]
    \centering
    \includegraphics[width=0.95\textwidth]{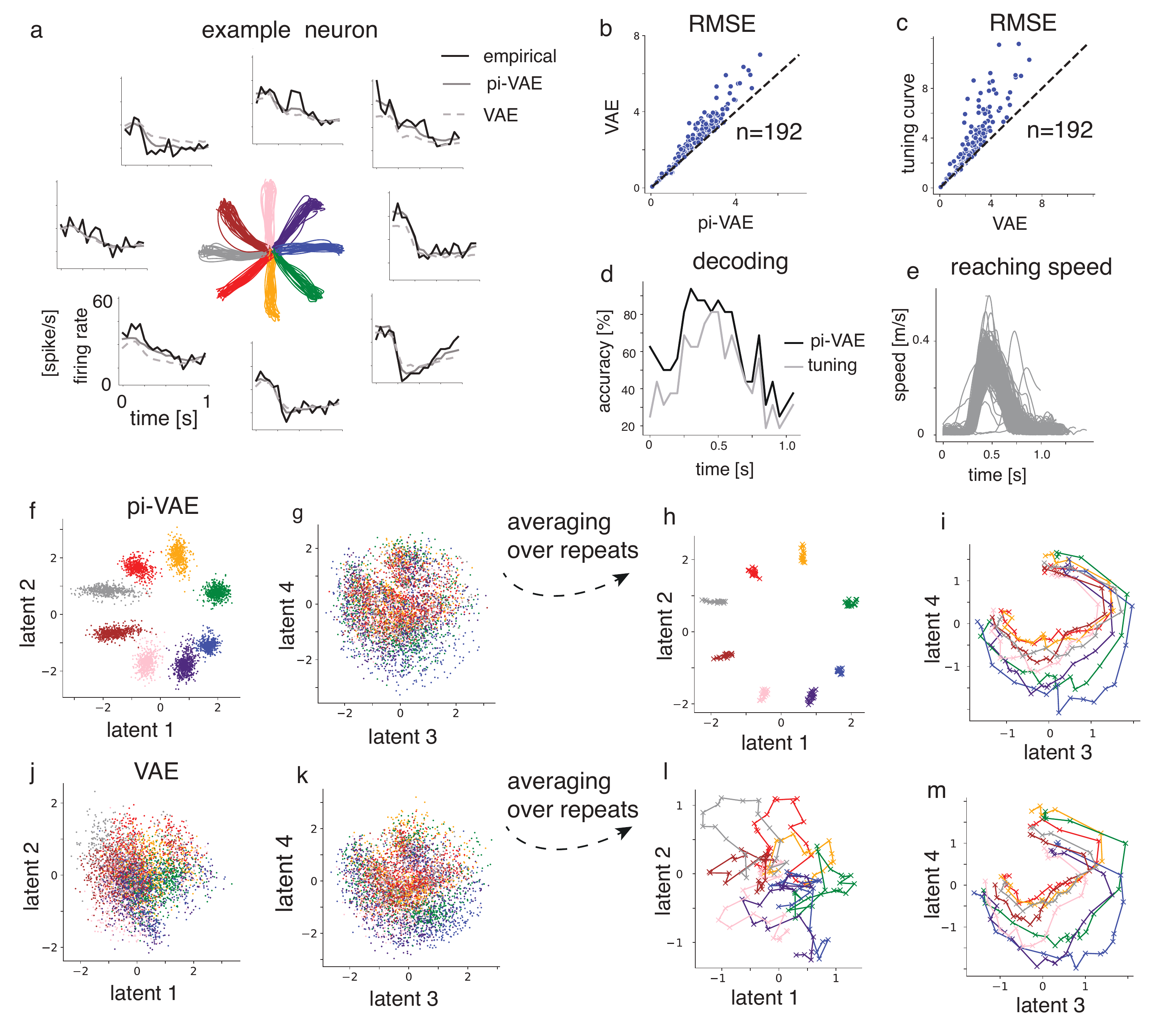}
    \caption{{Monkey reaching data. (a) Reaching trajectories for $8$ directions labeled by  colors. The empirical firing rate (PSTH, black solid line), fitted rate by pi-VAE (gray solid line) and VAE (gray dashed line) for an example neuron. (b,c) Scatter plots of RMSE of fitted rate ($n=192$ neurons) for comparing pi-VAE and VAE, as well as VAE and tuning curve. (d) Decoding accuracy as function of time on test data by pi-VAE and tuning curve model. (e) The reaching speed of the macaque for each trial. (f,g) Inferred latent based on pi-VAE, i.e.,mean of $q(\mathbf{z|x,u})$. (h,i) Inferred latent from pi-VAE averaged over repeats from the same reaching direction. (j,k,i,m) Similar to (f,g,h,i) for VAE. Notice the striking difference between (f) and (j).}}
    \label{fig:chewie_lat}
   \vspace{-2mm}
\end{figure}

{\bf Structure of the latent} 
We found that the latent variables estimated by pi-VAE exhibit clear structures. To start, the $8$ reaching directions are well separated in the subspace defined by the first two latent dimensions (Fig.~\ref{fig:chewie_lat}f,h). Strikingly, the geometrical structure of the inferred latent resembles the geometry of the reaching directions. In contrast, the third and fourth latent dimensions captures the evolution of the trajectories over time, and they are only weakly informative about reaching directions (Fig.~\ref{fig:chewie_lat}g,i).
Thus, the axes of the extracted latent space are easily interpretable. They provide information about how reaching direction is represented, and how neural dynamics evolve during reaching behavior. Notably, these axes were extracted automatically from pi-VAE, and no additional factor analysis techniques were applied to identify salient latent axes. Strikingly, the latent variables extracted from Session 2 show very similar structure (see SI Fig.~\ref{fig:monkey_ses2}).

In comparison, VAE extracts a much more entangled latent representation (Fig.~\ref{fig:chewie_lat}j-m). It appears that information about reaching direction displays in a twisted fashion, and mixes with the temporal evolution of the trajectories. Note that these differences between the latent structure obtained from two methods is not simply due to the label prior. The inferred latent from pi-VAE without the label prior (\emph{i.e.}, posterior mean of $q(\mathbf{z|x})$) shows similar though a bit more diffuse latent structure, which is expected due to the observation noise (see SI Fig.~\ref{fig:monkey_ses1_SI}). 




The nature of the neural code during primate reaching behavior is currently under heavy debate~\cite{Georgopoulos1986,Georgopoulos1982,Schwartz1988,Srinivasan2006,Churchland2012,Shenoy2013,Lebedev2019,Gallego2020}. While earlier proposals emphasized the encoding of task relevant variables~\cite{Georgopoulos1986,Georgopoulos1982,Schwartz1988,Paninski2004}, some of the more recent studies instead highlighted the importance of neural dynamics~\cite{Churchland2012,Shenoy2013,Ames2014,Kao2015}. 
As shown above, pi-VAE discovers latent space that exhibits striking
spatial (\emph{i.e.}, reaching direction) \emph{and} temporal (\emph{i.e.}, neural dynamics) structure that are separately encoded in different sub-spaces. 
These preliminary results may provide a way to reconcile the two prominent hypotheses~\cite{Georgopoulos1986,Shenoy2013}, as evidence for both hypotheses are now revealed
in the same model based on the same datasets. It would be important to apply our methods to larger datasets from multiple monkeys to examine the consistency of these effects in future.


\subsubsection{Rat hippocampal CA1 data}
We next applied pi-VAE to analyze a public rat's hippocampus dataset \cite{grosmark2016diversity,grosmark2016recordings}\footnote{\url{http://crcns.org/data-sets/hc/hc-11}}.  In this experiment, a rat ran on a $1.6$m linear track with rewards at both ends (L\&R) (Fig.~\ref{fig:rat_1d}a), while neural activity in the hippocampal CA1 area was recorded ($n=120$, putative pyramidal neurons). We focused on the data when the rat was running on the track~(Fig.~\ref{fig:rat_1d}a) and binned the ensemble spike activities into $25$ms bins. 
We defined the rat running from one end of the track to the other end as one lap, resulting in $84$ laps. We randomly split them into training, validation and test data ($68,8,8$ laps).  We defined rat's position and running directions as continuous labels $\mathbf{u}$. We fit 2-dimensional latent models to the data for both pi-VAE and VAE. 

 



\begin{figure}[!ht]
    \centering
    \includegraphics[width=1\textwidth]{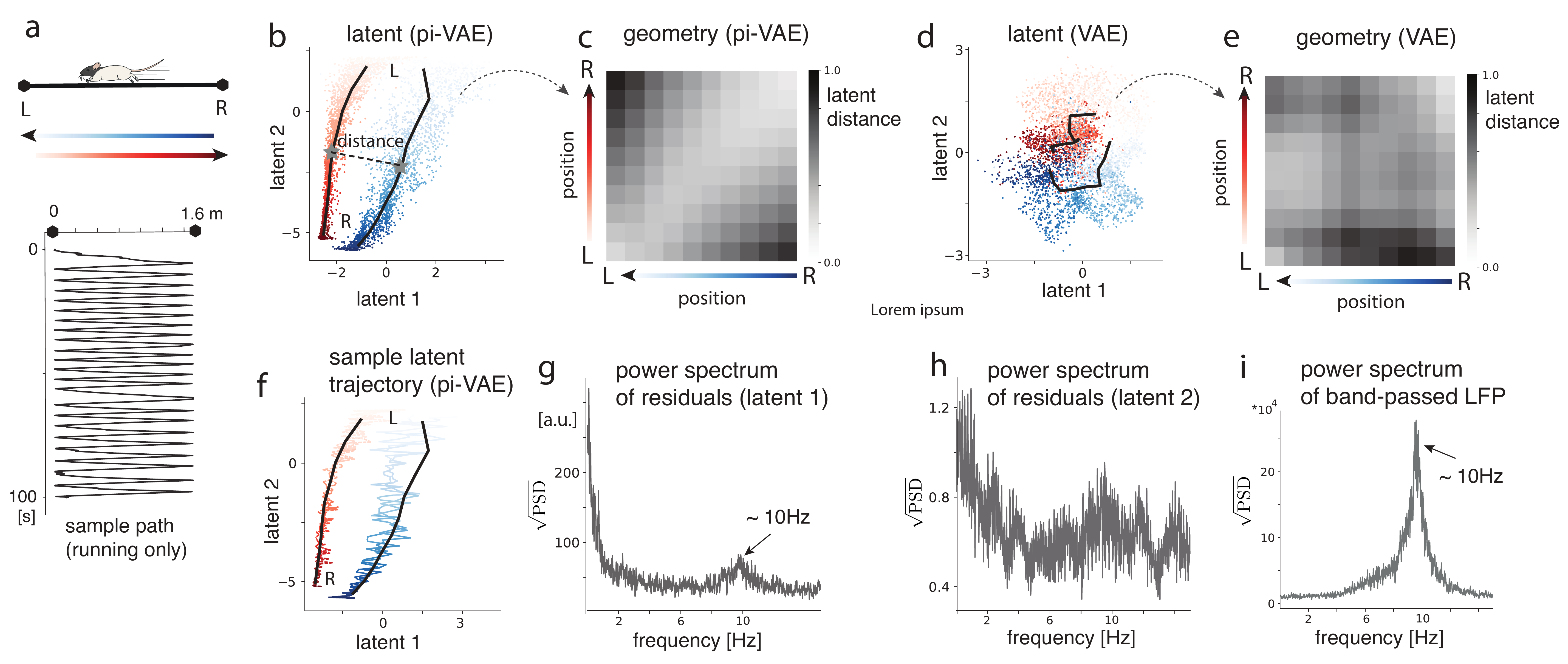}
    \caption{{Results on hippocampus CA1 data. (a) Linear track and sample running path. The two ends are labeled as L\&R.
     Two directions are color-coded by red and blue, and positions are coded by color saturation.
    (b) Inferred latent from pi-VAE. Black lines represent the mean of the latent states corresponding to position on the track for two directions. The distance between pairs of points from the two black lines is computed to quantify the latent geometry. An example pair of points are indicated using grey stars.
    The normalized distance for all possible pairs of points is shown in panel (c).
     (d,e) are defined similar to (b,c) for the VAE. Notice the striking difference between (b) and (d). (f) Two sample latent trajectories of the pi-VAE. (g,h) The power spectrum (PSD) of the residuals of the latent, given by the mean of $q(\mathbf{z|x,u})$ minus the mean of $p(\mathbf{z|u})$. (i) PSD of the band-passed LFP in the range of 5-11 Hz.}}
    \label{fig:rat_1d} 
\end{figure}

{\bf Goodness of fit and decoding performance} We found that pi-VAE again outperformed alternatives in having the lowest mean log marginal likelihood $-17.7$ (VAE,$-17.9$; tuning curve,$-18.2$; paired t-test, $p<10^{-6}$). Furthermore, we decoded the animal's location on the tracking based on pi-VAE model and tuning curve model.  On test data, pi-VAE achieves median absolute decoding error (MAE) of $12$cm (time window = 25ms), while the tuning curve (traditional ``place field''~\cite{o1976}) model achieves a MAE of $15$cm. 
This indicates that for the simple purpose of constructing an effective encoding model of the animal's position on the track, pi-VAE outperforms the traditional place field model~\cite{Zhang1998}.




{\bf Structure of the latent} 
Fig.~\ref{fig:rat_1d}b shows the latent space estimated by the pi-VAE, which exhibits overtly interpretable geometry: the collection of inferred latent states for R-to-L (blue) or L-to-R (red) running direction each forms band-like sub-manifold, and both are roughly in parallel with the second latent dimension~(Fig.~\ref{fig:rat_1d}b). The split into two sub-manifolds is consistent with the observation that
place fields of CA1 neurons often have firing fields that are uncorrelated between the two travel directions (``directional'' firing)~\cite{Battaglia2004,Davidson2009,Navratilova2012}.
To further quantify the geometrical relation between two sub-manifolds, we calculated the distance for pairs of points from the two branches(Fig.~\ref{fig:rat_1d}b).
This quantification for every possible pair (after binning the position into $16$cm bins) is plotted in Fig.~\ref{fig:rat_1d}c. We found that the manifold geometry across the two directions respects the geometry of the track, in the sense that smaller physical distance on the track leads to smaller latent distance. This is likely due to that a subset of place cells have non-directional (purely spatial) place fields~\cite{Battaglia2004,Davidson2009,Navratilova2012,Kay2020}. Importantly, our method gives a quantitative population level characterization of the consequence of having both directional and non-directional place cells. pi-VAE without label prior shows similar but more diffuse latent structure (see SI Fig.~\ref{fig:rat_SI}). In contrast, VAE results in a tangled latent representation, with the geometry not reflecting physical distance on the track~(Fig.~\ref{fig:rat_1d}d,e; also notice that striking difference between Fig.~\ref{fig:rat_1d}b and Fig.~\ref{fig:rat_1d}d).


To investigate whether the latent model could yield additional scientific insight, we next examined the temporal structure of the latent. Fig.~\ref{fig:rat_1d}f plots sample trajectories, from which we observed that temporal fluctuation mainly goes along the first latent dimension, and the suggestion of rhythmic structure. 
We subtracted the mean of prior $p(\mathbf{z|u})$ from the mean of posterior $q(\mathbf{z|x,u})$ to obtain the residual fluctuations. Examination of the power spectrum density (PSD) along each dimension of the residuals led to two observations: i) the temporal fluctuation is indeed concentrated on the first latent dimension, as indicated by the magnitude of the PSD; ii) the first, but not the second, dimension exhibits a striking peak at $\sim 10$Hz. We reasoned that the second observation might be related to $\theta$-oscillation in the local circuit, which is known to modulate the firing of CA1 neurons~\cite{Skaggs1996,Foster2007,Pastalkova2008,Kay2020}. We thus examined the simultaneously recorded local field potential (LFP) data during running. Indeed, we found that the $\theta$ peaked at $\sim 10$Hz for this rat. Interestingly, the  $10$Hz $\theta$-oscillation is faster than the typically reported 8 Hz average frequency~\cite{Whishaw1973,Buzsaki2002}, yet is consistent with the latent structure extracted from pi-VAE. 

Overall, pi-VAE extracts latent space that is clearly interpretable, with one dimension encoding position information, and the other dimension capturing temporal organization which is likely related to $\theta$-rhythm. The observation that the position encoding and the rhythmic-like fluctuation are roughly orthogonal is particularly interesting, and is consistent with previous results from the single cell analysis suggesting that theta phase and place fields may encode independent information~\cite{Huxter2003}. Additional investigations will be needed to test these hypotheses in greater depth.

\begin{figure}[!ht]
    \centering
    \includegraphics[width=0.95\textwidth]{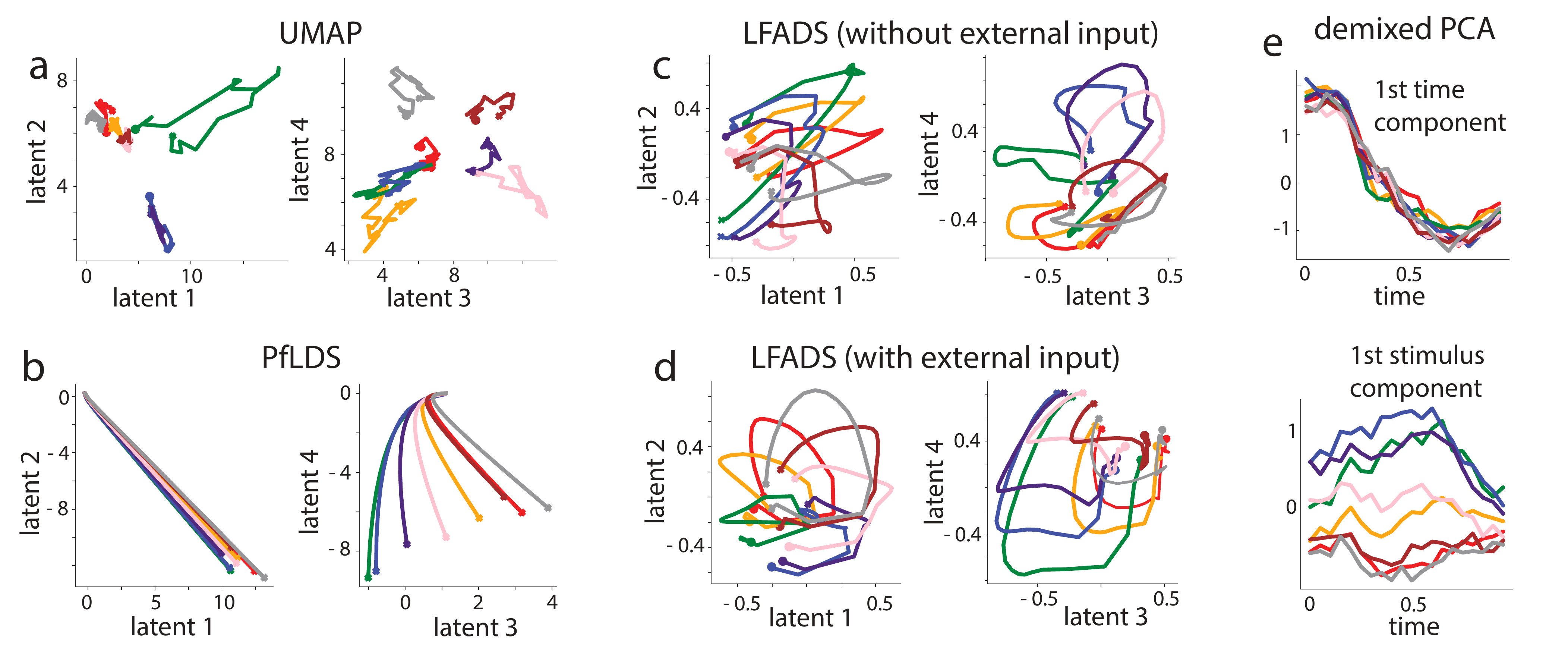}
    \caption{\small{Results from alternative methods based on monkey reaching data. 
    a) UMAP~\cite{Mcinnes2018}. b) PfLDS~\cite{Gao2016}. c,d) LFADS~\cite{Pandarinath2018}, without and with reaching direction as an external input. e) demixed PCA~\cite{Kobak2016} with the first time and stimulus component plotted.
    Color-coded, averaged latent trajectories corresponding to each reaching direction was plotted for each method. The filled dot and cross represent starting and ending of the trial. }}
    \label{fig:alternative} 
\end{figure}

\subsection{Comparison to alternative methods}

We further tested several alternative methods on the monkey reaching data, including both linear methods (demixed PCA~\cite{Kobak2016}) and nonlinear methods (UMAP~\cite{Mcinnes2018}, PfLDS~\cite{Gao2016}, and LFADS~\cite{Pandarinath2018}) (see Fig.~\ref{fig:alternative}). Overall we found that, while the extracted latent structures from these methods exhibited interesting characteristics, none of them resulted in fully disentangled latents. Furthermore, none of them appeared to recover the geometry of the physical reaching targets. (More analysis of the hippocampus data can be found in SI Sec. \ref{appendix:alt})

To start, supervised UMAP recovered latents corresponding to different directions as different clusters, but without clear representations of temporal dynamics(see Fig.~\ref{fig:alternative}a). Furthermore, LFADS~\cite{Pandarinath2018} and PfLDS~\cite{Gao2016} both led to smooth trajectories. Although the trajectories for different directions were separated in the 4-dimensional space, directions and temporal dynamics were entangled so that it was difficult to interpret each individual latent dimension (Fig.~\ref{fig:alternative}b,c,d). 
Demixed PCA~\cite{Kobak2016} with both time and directions as labels
still entangled time and directions (stimulus components change with time) to some extent (Fig.~\ref{fig:alternative}e).
A few methodological considerations are worth mentioning here.
First, LFADS can take task variables as external inputs to the model RNN. We thus tried LFADS with or without reaching direction as external inputs (Fig.~\ref{fig:alternative}c,d). Second, demixed PCA only deals with discrete task variables each with the same number of trials and each trial with the same length, and could not recover additional latent fluctuations as our method. Third, UMAP can incorporate label information for supervised learning, and we used the reaching directions as labels (Fig.~\ref{fig:alternative}c,d) to make a more fair comparison. However, we found that it did not recover temporal dynamics. 

\vspace{-3.5mm}
\section{Discussion}
\vspace{-3.5mm}
We have presented a new model framework for analyzing neural population data by integrating ingredients from latent-based and regression-based approaches. Our model pi-VAE, while being expressive and nonlinear, is constrained by additional dependence on task variables. pi-VAE generalizes recent work on identifiable VAE~\cite{khemakhem2019variational,li2019,sorrenson2020disentanglement} to deal with spike train data. Although pi-VAE yields promising preliminary insights into the neural codes during a rat navigation task and macaque reaching task, we should emphasize that more systematic investigations based on larger datasets across different subjects will be needed to further elaborate these results.

Our method is motivated by leveraging the strength of regression-based methods and latent-based models to increase the identifiability and interpretability, a direction received little attention previously. To do so, we took advantage of the ``label prior" to model the impact of task variables on neural activities along with the influence of the latent states. One potential concern is that, when incorporating too many labels, there may not be enough data to fit the model. Several previous methods exploited temporal smoothness priors to de-noise the data, which were implemented 
via Gaussian process~\cite{Byron2009,Zhao2017,Wu2017}, linear~\cite{Macke2011,Buesing2012,Gao2016} or nonlinear dynamical systems~\cite{Pandarinath2018,Duncker2019}. Although not pursued here, adding temporal smoothness priors into pi-VAE may increase the data efficiency and further improve the performance of the model. 
It is also worth mentioning that although we have focused on the spike train data, our method may be modified to deal with the calcium imaging data incorporating noise models that is more appropriate to the deconvolved calcium traces~\cite{Wei2019}. Last but not least, while the current study mainly concerns the neuroscience applications of pi-VAE, some of the technical advances made here may be of interest to the machine learning community as well.

{\bf Acknowledgements}
We thank Matthew G. Perich and Lee E. Miller for sharing the monkey reaching data. We thank Kenneth Kay, Yuanjun Gao and Liam Paninski for helpful comments and discussions, as well as three anonymous reviewers for their feedback which helped improving the paper. Xue-Xin Wei is supported by the startup funds provided by UT Austin.

\section{Broader Impact}
In this paper, we develop a new analysis method for analyzing neural population data. This method can be used to extract and identify the underlying critical structure underlying the neural activity recorded simultaneously from many neurons in the brain. It can also be used to construct improved response models of how various kinds of task variables are encoded in the brain. Our approach is of broad applicability to the neural population recording under a variety of scenarios. It may also inspire future work in neural data analysis. Progress in this area will facilitate both basic science research about the brain as well as clinical applications, such as invasive and non-invasive brain-machine interfaces. 

Our method is developed based on recent advances in a class of deep generative models, i.e., identifiable variational auto-encoder. It may be of interest to the machine learning community working on deep generative models. Our research extends previous works on identifiable variational auto-encoder to a different setup. Our preliminary promising results may provide useful insights into the further development of more identifiable deep generative models, which in the longer term may help establish more interpretable and robust AI technology, and help understand the limitations of these technologies.


\bibliographystyle{plainurl}
\bibliography{ivae}

\newpage
\appendix
\beginsupplement

\section*{Supplementary Information (SI)}

\section{Proof of identifiability for pi-VAE}

\begin{theorem}
Assume that we observe data sampled from pi-VAE model defined according to equation~\ref{eqn:mdl},\ref{eq:prior} with Poisson noise and parameters $\bm{\theta}=\left(\mathbf{f,T,}\bm{\lambda}\right)$. Assume the following holds:

i) The firing rate function $\mathbf{f}$ in equation~\ref{eqn:mdl} is injective.

ii) The sufficient statistics $T_{i,j}$ in~\ref{eq:prior} are differentiable almost everywhere, and their derivatives $T_{i,j}'$ are nonzero almost everywhere for $1\leq i\leq m,1\leq j\leq k$.

iii) There exists $mk + 1$ distinct points $\mathbf{u}^{0},\cdots,\mathbf{u}^{mk+1}$ such that the matrix 
\[L = \left(\bm{\lambda}(\mathbf{u}^{1}) -\bm{\lambda}(\mathbf{u}^{0}),\cdots , \bm{\lambda}(\mathbf{u}^{mk})- \bm{\lambda}(\mathbf{u}^{0})\right)\]
of size $mk\times mk$ is invertible, then the pi-VAE model is identifiable up to $\sim$.
\end{theorem}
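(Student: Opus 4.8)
The plan is to mirror the identifiability proof for the additive-noise iVAE of Khemakhem et al., replacing only the step that removes the observation noise. The argument splits into three stages: (1) from equality of the observed conditionals $p_{\bm{\theta}}(\mathbf{x}|\mathbf{u})=p_{\tilde{\bm{\theta}}}(\mathbf{x}|\mathbf{u})$, deduce that the two models induce the same distribution over firing rates $\mathbf{f}(\mathbf{z})$ given $\mathbf{u}$; (2) turn this into a pointwise identity between the exponential-family log-densities; and (3) evaluate at the $mk+1$ label points to extract the linear relation defining $\sim$ and show the transforming matrix is full rank.

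Stage (1) is where the Poisson model requires a genuinely new argument. In the additive case one deconvolves using the (a.e.\ nonzero) characteristic function of the noise; here I would instead use the probability generating function. Because the coordinates of $\mathbf{x}$ are conditionally independent Poisson with rates $\mathbf{f}(\mathbf{z})$, the generating function factorizes,
\[
G_{\bm{\theta}}(\mathbf{s}|\mathbf{u}) = \sum_{\mathbf{x}} p_{\bm{\theta}}(\mathbf{x}|\mathbf{u})\prod_{l=1}^n s_l^{x_l} = \int \exp\!\Big(\sum_{l=1}^n f_l(\mathbf{z})(s_l-1)\Big)\, p_{\mathbf{T},\bm{\lambda}}(\mathbf{z}|\mathbf{u})\,d\mathbf{z}.
\]
Pushing the latent density forward through the injection $\mathbf{f}$ (with the volume correction on the $m$-dimensional image manifold), the right-hand side is exactly the Laplace transform of the firing-rate distribution evaluated at $(s_l-1)_l$, with argument in the negative orthant for $\mathbf{s}\in[0,1]^n$ so that convergence is immediate. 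Since $p_{\bm{\theta}}(\mathbf{x}|\mathbf{u})=p_{\tilde{\bm{\theta}}}(\mathbf{x}|\mathbf{u})$ forces $G_{\bm{\theta}}=G_{\tilde{\bm{\theta}}}$ on an open neighborhood, uniqueness of the Laplace transform (equivalently, identifiability of Poisson mixtures) yields that the law of $\mathbf{f}(\mathbf{z})\,|\,\mathbf{u}$ equals the law of $\tilde{\mathbf{f}}(\tilde{\mathbf{z}})\,|\,\mathbf{u}$.

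From here Stages (2)--(3) follow the source almost verbatim. Taking logarithms of the two equal firing-rate densities at a point $\mathbf{x}$ in the image and substituting the exponential-family form with $\mathbf{z}=\mathbf{f}^{-1}(\mathbf{x})$, $\tilde{\mathbf{z}}=\tilde{\mathbf{f}}^{-1}(\mathbf{x})$, produces an identity in which the $\mathbf{u}$-independent terms (base measures and Jacobian volume factors) and the inner products $\langle \mathbf{T}(\mathbf{z}),\bm{\lambda}(\mathbf{u})\rangle$ appear additively. Writing this identity at $\mathbf{u}^{0},\dots,\mathbf{u}^{mk}$ and subtracting the equation at $\mathbf{u}^{0}$ cancels every $\mathbf{u}$-independent contribution, leaving $L^{\top}\mathbf{T}(\mathbf{f}^{-1}(\mathbf{x})) = \tilde{L}^{\top}\tilde{\mathbf{T}}(\tilde{\mathbf{f}}^{-1}(\mathbf{x})) + \mathbf{b}$, where $\mathbf{b}$ collects the normalizer differences. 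Assumption (iii), invertibility of $L$, then gives $\mathbf{T}(\mathbf{f}^{-1}(\mathbf{x})) = A\,\tilde{\mathbf{T}}(\tilde{\mathbf{f}}^{-1}(\mathbf{x})) + \mathbf{c}$ with $A=(L^{\top})^{-1}\tilde{L}^{\top}$, which is precisely the relation $\sim$.

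The remaining and most delicate step is verifying that $A$ is full rank, and I would argue it exactly as in the source. Assumption (ii), that each $T_{i,j}'$ is nonzero almost everywhere, lets one select distinct points in the image at which the Jacobians of $\mathbf{x}\mapsto\mathbf{T}(\mathbf{f}^{-1}(\mathbf{x}))$ assemble into an invertible $mk\times mk$ matrix; since that matrix factors through $A$, it forces $\operatorname{rank}A=mk$. This is the one place where care is genuinely required, because the block structure of $\mathbf{T}$ (each $T_{i,j}$ depends only on $z_i$) means full rank cannot be read off at a single point and must be built up across several evaluation points --- in effect the minimality of the exponential family. Granting this, $A$ is invertible and $\bm{\theta}\sim\tilde{\bm{\theta}}$, which completes the proof.
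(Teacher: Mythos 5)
Your proposal is correct, but it takes a genuinely different route from the paper. The paper's own proof is a three-sentence reduction: keep the zeros of the Poisson counts and map every positive count to one, so that each coordinate becomes Bernoulli with parameter $1-\exp\left(-f_l(\mathbf{z})\right)$; since $\lambda\mapsto 1-\exp(-\lambda)$ is a coordinatewise bijection, the transformed data follow a Bernoulli observation model with an injective mean function and the same label prior, hence satisfying assumptions (i)--(iii); identifiability of the Bernoulli observation model, already proved in~\cite{khemakhem2019variational}, then gives $\bm{\theta}\sim\tilde{\bm{\theta}}$, and the relation transfers back to the Poisson parametrization because the thresholding is parameter-independent and compatible with $\mathbf{f}$ through the coordinatewise bijection. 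You instead re-run the entire iVAE argument, replacing the characteristic-function deconvolution by uniqueness of the probability generating function / Laplace transform (classical identifiability of Poisson mixtures), and then repeat the exponential-family subtraction and the full-rank argument for $A$. Both are valid, and the trade-off is clear: the paper's reduction buys brevity and delegates the delicate stages (the pointwise density identity on the image manifold, the rank of $A$ via assumption (ii)) wholesale to the cited theorem, at the price of discarding information (only the zero/nonzero pattern is used) and of relying on the Bernoulli case having been established elsewhere; your route buys self-containedness and generality, since the transform-uniqueness step extends verbatim to other count-noise families whose mixtures are identifiable (e.g., negative binomial), whereas the thresholding trick exploits a feature specific to the Poisson--Bernoulli pair. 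If you write your version out in full, the one step to make explicit is the passage from equality of pushforward laws to a pointwise identity of densities: you must note that the two image manifolds coincide (equal supports) and that the Jacobian volume factors arising from the $m$-dimensional pushforward are $\mathbf{u}$-independent, so they cancel in the subtraction at $\mathbf{u}^{0},\dots,\mathbf{u}^{mk}$ --- exactly as in the additive-noise proof, and this is precisely where assumption (i) does its work.
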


\begin{proof}
\cite{khemakhem2019variational} has proved that the Bernoulli observation model is identifiable under the same set of assumptions. For Poisson observations with mean firing rate as $\lambda$, we can transform it to Bernoulli observations with parameter $p=1-\exp(-\lambda)$ by keeping the zeros and treating the positive values as ones.  Because the Bernoulli model is identifiable, the Poisson model is also identifiable.
\end{proof}

\section{Network architecture}\label{appendix:gin}
For both the generative models in pi-VAE and VAE, we used the following strategy to parameterize $\mathbf{f}(\cdot)$ which maps the $m$-dimensional latent $\mathbf{z}$ to the mean firing rate of $n$-dimensional Poisson observations. We first mapped the $\mathbf{z}_{1:m}$ to the concatenation of $\mathbf{z}_{1:m}$ and $t\left(\mathbf{z}_{1:m}\right)$, where $t(\cdot):\mathbb{R}^{m}\rightarrow\mathbb{R}^{n-m}$ is parameterized by a feed-forward neural network with a linear output and $2$ hidden layers, each containing $\floor{n/4}$ nodes with ReLU activation function. Then we applied two GIN blocks. Same as~\cite{sorrenson2020disentanglement}, we defined the affine coupling function as the concatenation of the scale function $s$ and the translation function $t$, computed together for efficiency, applied two affine coupling functions per GIN block, and randomly permuted the input before passing it through each GIN block. We defined both $s,t$ in GIN block as mapping: $\mathbb{R}^{\floor{n/2}}\rightarrow\mathbb{R}^{n-\floor{n/2}}$. The scale function $s$ is passed through a clamping function $0.1\tanh(s)$, which limits the output to the range $(-0.1,0.1)$. For affine coupling function, we have a linear output layer with and $2$ hidden layers, each containing $\floor{n/4}$ nodes with ReLU activation function.

We modeled the prior $p_{\bm{T,\lambda}}(\mathbf{z|u})$ in pi-VAE as independent Gaussian distribution. The natural parameters $\lambda_{i,j}$ are the Gaussian means and variances. For discrete $\mathbf{u}$, we used different values of the mean and variance for different labels. For continuous $\mathbf{u}$, we parameterized the mean and spectrum decomposition of variance together by a feed-forward neural network with a linear output layer and $2$ hidden layers, each containing $20$ nodes with tanh activation function (the mean and variance share the $2$ hidden layers).
For the cases of mixed discrete and continuous labels $\mathbf{u}$, we encoded the discrete labels with a one-hot vector, and mapped it together with the continuous components to the mean and spectrum decomposition of variance using feed-forward neural network as described in the continuous case.


For the recognition model in pi-VAE and VAE, we used $q_{\bm{\phi}}(\mathbf{z|x})$ as independent Gaussian distribution, and parameterized the mean and the spectrum decomposition of the variance separately using feed-forward neural network with a linear output layer and $2$ hidden layers, each containing $60$ nodes with tanh activation function. 

Code implementing the algorithms is available at \href{https://github.com/zhd96/pi-vae}{https://github.com/zhd96/pi-vae}.

\section{Synthetic data simulations}
To generate firing rate of the Poisson process from simulated latent $\mathbf{z}$, we first padded $\mathbf{z}$ with $n$-$m$ zeros, then applied $4$ RealNVP blocks, each containing $2$ affine coupling functions with the same structure as defined in section~\ref{appendix:gin} except that $s$ does not need to have sum equal to $0$ here, and we used $\floor{n/2}$ nodes for each hidden layer.

For discrete label simulation shown in Fig.~\ref{fig:simulation}a-d, we simulated $10^4$ observations, and split them into training, validation, test data ($80\%,10\%,10\%$ respectively). We set the batch size to be $200$ during training, and trained for $600$ epochs.
For the continuous label simulation shown in Fig.~\ref{fig:simulation}e-h, we simulated $1.5\times 10^4$ observations. The training-validation-test split is the same as discrete label simulation. 
We set batch size as $300$, and trained for $1000$ epochs.

\section{Monkey reaching data: session 2}
For each reaching direction, there are $\sim 35$ trials (see Fig.~\ref{fig:monkey_ses2}a). We analyzed  $211$ neurons from PMd area, and focused on the reaching period from go cue (defined as $t=0$) to the end, which typically last for $\sim 1$ second. We binned the ensemble spike activities into $50$ms bins. We randomly split the dataset into $34$ batches, where each batch contains at least one trial from each direction. We randomly took $28$ batches as training data, $3$ batches as validation data and $3$ batches as test data. Similar to Session 1, We fit 4-dimensional latent models to the data based on pi-VAE and VAE respectively. Results are shown in Fig.~\ref{fig:monkey_ses2}.

\section{Alternative methods} \label{appendix:alt}

\subsection{Monkey reaching data}
For supervised UMAP\footnote{\href{https://umap-learn.readthedocs.io/en/latest/}{https://umap-learn.readthedocs.io/en/latest/}}, we set the reaching directions as labels, and embedded the high dimensional spike count data into a 4-dimensional latent space. Other parameters were set to be the default values.
For PfLDS, we implemented the algorithm on our own using the same neural network architecture as in~\cite{Gao2016} (the original code provided by the authors of that paper depends on Python Theano library, which has not been maintained for a while). 
We assumed a 4-dimensional latent space and Poisson observation model. We set the learning rate as $2.5\times 10^{-4}$ and trained for $1500$ epochs. Each batch consisted of a single trial. The training, validation and test sets had $184, 16, 17$ trials respectively.
For LFADS\footnote{\href{https://github.com/lfads/models}{https://github.com/lfads/models}}, we assumed a 4-dimensional latent space along with a Poisson observation model. We applied two versions of the model to the data, with the reaching direction as an additional input and without this input. Other parameters were set to be the default values. We pre-processed the data by discarding all the trials less than 1 second and trimming longer trials to make them 1 second long. Each batch consisted of a single trial. The training, validation and test sets had $177, 16, 16$  trials respectively.
For demixed PCA\footnote{\href{https://github.com/machenslab/dPCA}{https://github.com/machenslab/dPCA}}, we pre-processed the data to make each reaching direction had the same number of trials and each trial had the same length (\emph{i.e.},1 second). We took time and stimulus as labels. We used 2 sets of components, each containing time, stimulus, as well as time and stimulus mixing components. Other parameters (eg. regularizer) were set to be the default values.



\subsection{Hippocampus data}
For supervised UMAP, we used 2-dimensional latent variables model with rat's locations as labels. Other parameters were set as default.
For PCA, we used two principal components.
For ``PCA after LDA", we first applied LDA with rat's running directions as response and neural activities as predictors, and identified the 1-dimensional linear boundary which could separate the neural activities of two directions most. Then we projected the neural activities on this boundary using linear regression, then applied PCA with $2$ principal components on the residuals. 
We found that the resulting latents were all less interpretable than pi-VAE (Fig. \ref{fig:SI_hipp}), with no dimension directly representing the rat's location. Also the rhythmic-like fluctuations spanned across dimensions, rather than concentrated in one dimension (not shown).


\begin{figure}[!ht]
    \centering
    \includegraphics[width=0.99\textwidth]{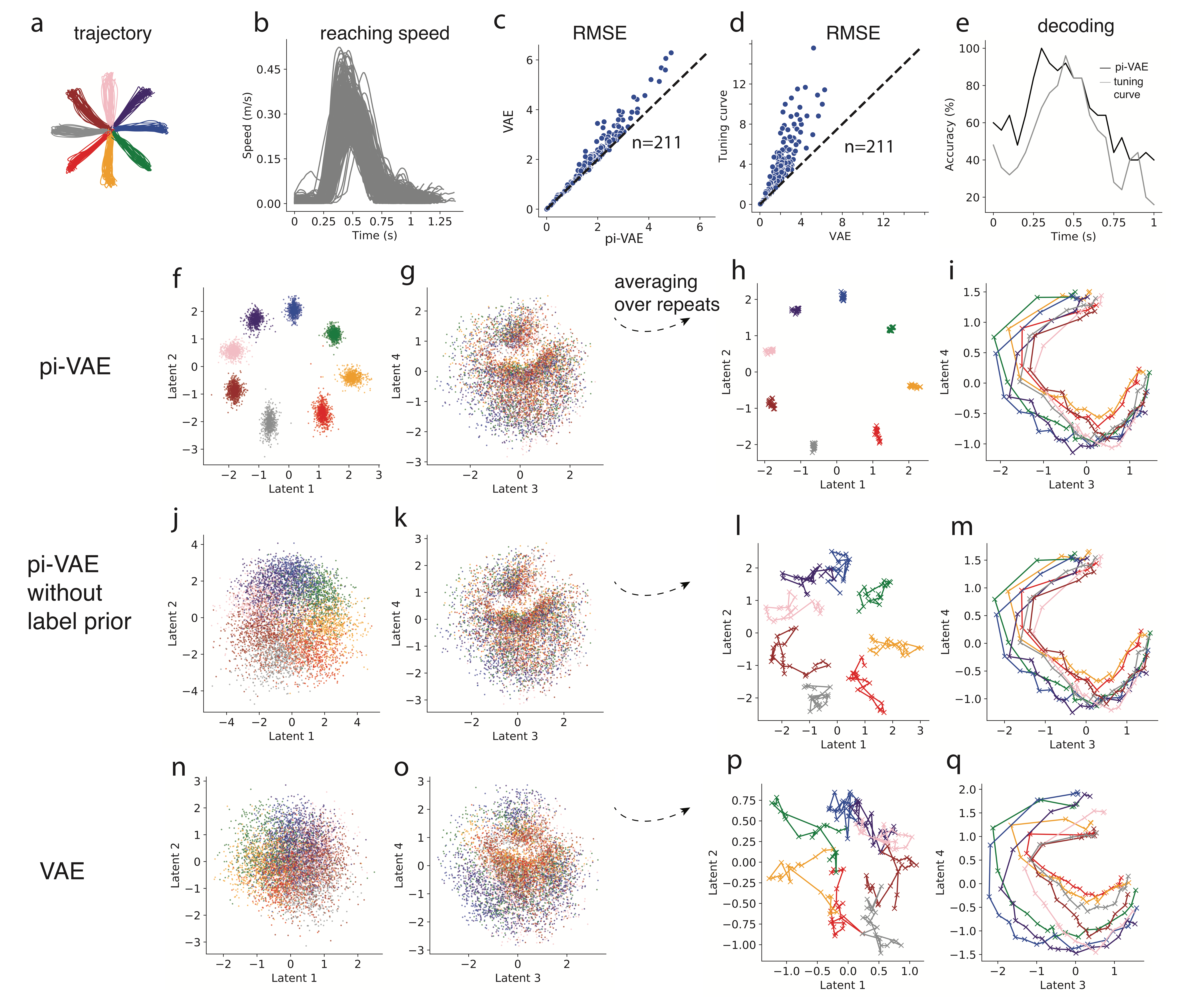}
    \caption{Results on monkey reaching data (Session 2). These results are similar to those obtained from Session 1 as reported in the main text. 
    (a) The macaque's reaching trajectories for $8$ directions labeled by different colors. 
    (b) The reaching speed of the macaque for each trial.
    (c,d) Scatter plots of RMSE of fitted rate ($n=211$ neurons) for comparing pi-VAE and VAE, as well as VAE and tuning curve. 
    (e) Decoding accuracy as function of time on test data by pi-VAE and tuning curve model.  
    (f,g) Inferred latent based on pi-VAE, i.e.,mean of $q(\mathbf{z|x,u})$. (h,i) Inferred latent from pi-VAE averaged over repeats from the same reaching direction. 
    (j,k) Mean of $q(\mathbf{z|x})$ from pi-VAE. 
    (l,m) Mean of $q(\mathbf{z|x})$ by pi-VAE averaging over repeats from the same reaching direction.
    (n-q) Similar to (f-i) for VAE.}
    \label{fig:monkey_ses2}
\end{figure}

\begin{figure}
    \centering
    \includegraphics[width=1\textwidth]{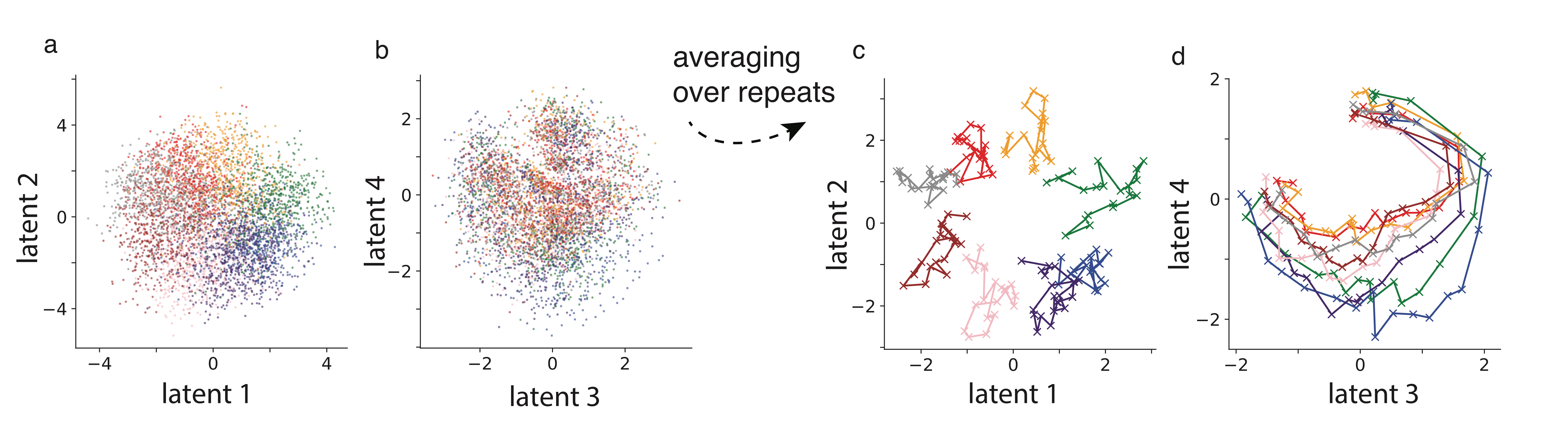}
    \caption{Related to Fig.~\ref{fig:chewie_lat}, on reaching data. Inferred latent without label prior using pi-VAE still are still highly structured and interpretable. The first two dimensions carry information about the reaching direction, while the third and fourth dimension mainly captures the dynamics over the time course of a trial.
    (a,b) Mean of $q(\mathbf{z|x})$ from pi-VAE. (c,d) Mean of $q(\mathbf{z|x})$ by pi-VAE averaging over repeats from the same reaching direction.}
    \label{fig:monkey_ses1_SI}
\end{figure}

\begin{figure}[!ht]
    \centering
    \includegraphics[width=1\textwidth]{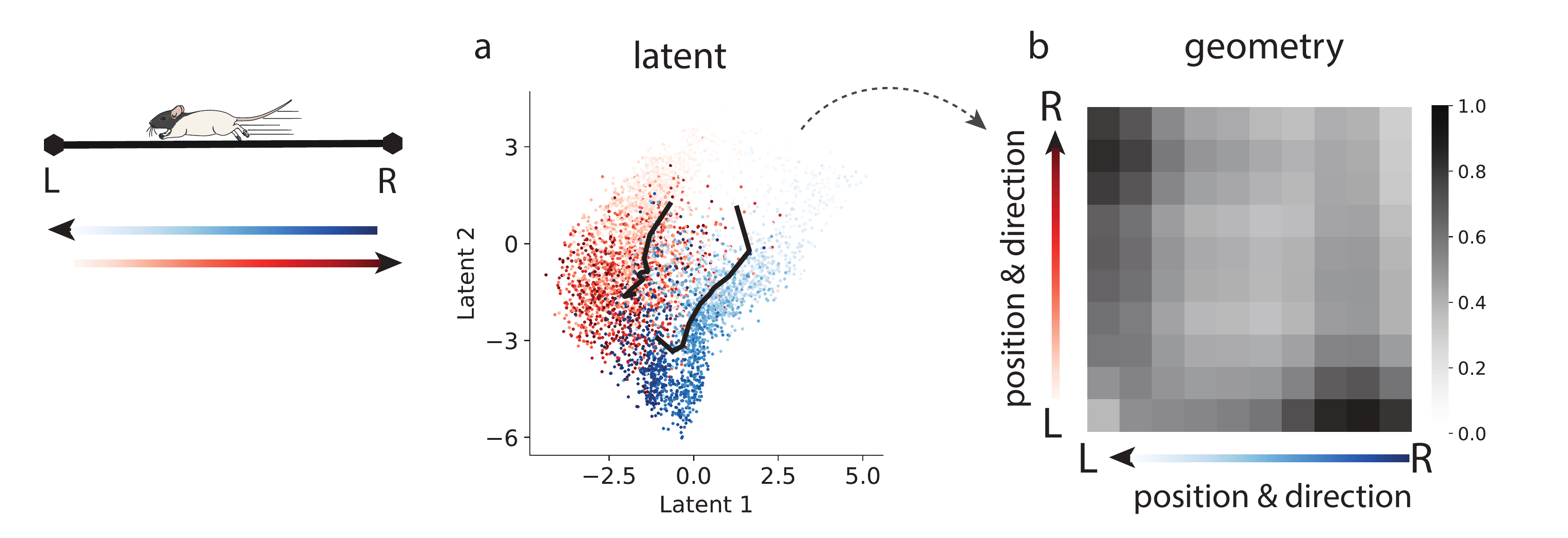}
    \caption{Related to Fig.~\ref{fig:rat_1d}. Results on Hippocampus CA1 data. Inferred latents without the label prior using  pi-VAE still exhibit clear structure, with the latent geometry respecting the geometry of the track.
    (a) Mean of $q(\mathbf{z|x})$ from pi-VAE. Two directions are color-coded by red and blue, and positions are coded by color saturation.
    Black lines represent the mean of the latent states corresponding to position on the track for two directions. (b) The distance between pairs of points from the two black lines is computed to quantify the latent geometry.}
    \label{fig:rat_SI}
\end{figure}

\begin{figure}[!ht]
    \centering
    \includegraphics[width=0.99\textwidth]{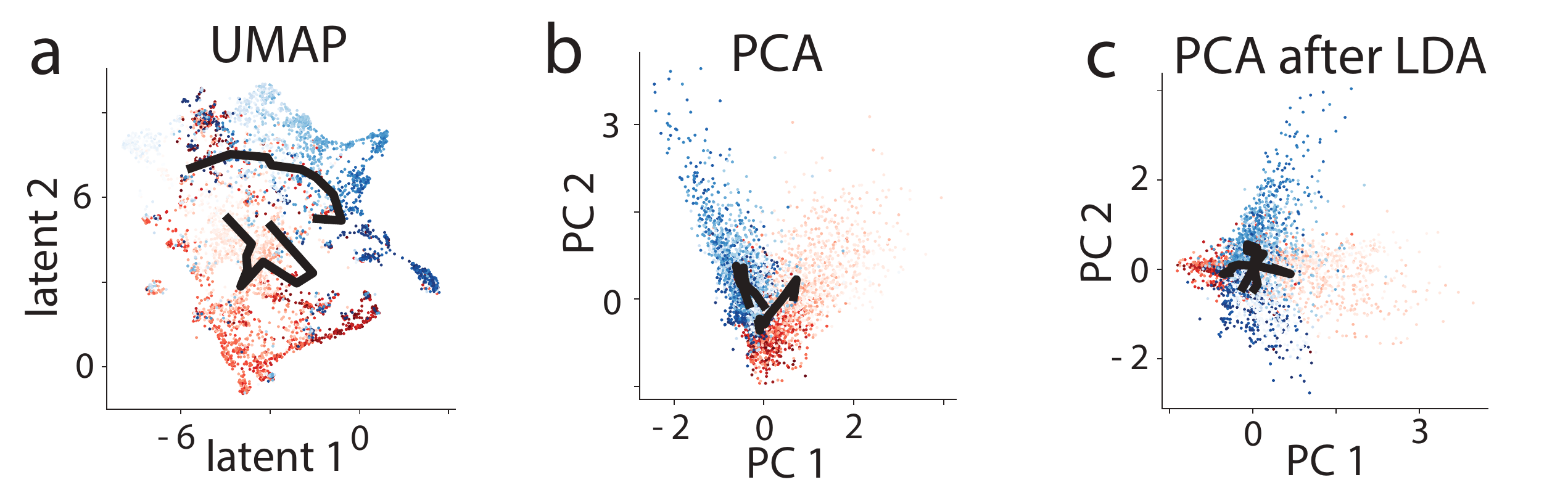}
    \caption{\small{Hippocampus data: results from several alternative methods. a) UMAP. b) PCA. c) PCA after Linear Discriminant analysis (LDA). Notice that these methods recovered more entangled representation compared to pi-VAE.}}
    \label{fig:SI_hipp} 
\end{figure}

\end{document}